\newcommand{\reals}{{\mbox{\bf R}}}
\newcommand{\mnorm}[1]{{\vert\kern-0.3ex\vert\kern-0.3ex\vert #1 
\vert\kern-0.3ex\vert\kern-0.3ex\vert}}
\newcommand{\argmin}{\operatornamewithlimits{argmin}}
\newcommand{\E}{{\mathbbmss{E}}}
\newcommand{\braket}[1]{\langle #1 \rangle}
\newcommand{\cf}{{\it cf.}}
\newcommand{\eg}{{\it e.g.}}
\newcommand{\ie}{{\it i.e.}}
\gdef\eeaa#1pt{#1}}      
\def\accentadjtext#1{\setbox0\hbox{$#1$}\kern   
                 \expandafter\eeaa\the\fontdimen1\textfont1 \ht0 }
\def\accentadjscript#1{\setbox0\hbox{$#1$}\kern 
                 \expandafter\eeaa\the\fontdimen1\scriptfont1 \ht0 }
\def\accentadjscriptscript#1{\setbox0\hbox{$#1$}\kern   
                 \expandafter\eeaa\the\fontdimen1\scriptscriptfont1 \ht0 }
\def\accentadjtextback#1{\setbox0\hbox{$#1$}\kern       
                 -\expandafter\eeaa\the\fontdimen1\textfont1 \ht0 }
\def\accentadjscriptback#1{\setbox0\hbox{$#1$}\kern     
                 -\expandafter\eeaa\the\fontdimen1\scriptfont1 \ht0 }
\def\accentadjscriptscriptback#1{\setbox0\hbox{$#1$}\kern 
                 -\expandafter\eeaa\the\fontdimen1\scriptscriptfont1 \ht0 }
\def\mdots@{\mathinner.\nonscript\!.%
  \ifx\next,.\else\ifx\next;.\else\ifx\next..\else
  \nonscript\!\mathinner.\fi\fi\fi}
\let\ldots\mdots@
\let\cdots\mdots@
\let\dotso\mdots@
\let\dotsb\mdots@
\let\dotsm\mdots@
\let\dotsc\mdots@
\def\vdots{\vbox{\baselineskip2.8\p@ \lineskiplimit\z@
     \kern6\p@\hbox{.}\hbox{.}\hbox{.}\kern3\p@}}
\def\ddots{\mathinner{\mkern1mu\raise8.6\p@\vbox{\kern7\p@\hbox{.}}%
     \raise5.8\p@\hbox{.}\raise3\p@\hbox{.}\mkern1mu}}
\newtheoremstyle{descriptive}%
	{\topsep}   
	{\topsep}   
	{\rmfamily} 
	{}          
	{\bfseries} 
	{.}         
	{ }         
	{}          
\newtheoremstyle{propositional}%
	{\topsep}   
	{\topsep}   
	{\itshape}  
	{}          
	{\bfseries} 
	{.}         
	{ }         
	{}          
\theoremstyle{propositional}
\newtheorem{thm}{Theorem}[section]
\newtheorem{prop}[thm]{Proposition}
\newtheorem{lem}[thm]{Lemma}
\theoremstyle{descriptive}
\newtheorem{assumption}[thm]{Assumption}
\newtheoremstyle{remarkstyle}%
    {\topsep}    
    {\topsep}    
    {\rmfamily} 
    {}          
    {\itshape} 
    {.}         
    { }         
    {}          
\theoremstyle{remarkstyle}
\newtheorem{remark}[thm]{Remark}
\title{Personalized Federated Learning under Model Dissimilarity Constraints}
\author{%
  Samuel Erickson \\
  Division of Decision and Control Systems \\
  KTH Royal Institute of Technology \\
  Stockholm, Sweden \\
  \texttt{samuelea@kth.se} \\
  \And
  Mikael Johansson \\
  Division of Decision and Control Systems \\
  KTH Royal Institute of Technology \\
  Stockholm, Sweden \\
  \texttt{mikaelj@kth.se} \\
}
\begin{document}

\maketitle

\begin{abstract}

    One of the defining challenges in federated learning is that of statistical
    heterogeneity among clients. We address this problem with \textsc{Karula},
    a regularized strategy for personalized federated learning that constrains
    the pairwise model dissimilarities between clients based on the difference
    in their distributions. Model similarity is measured using a surrogate for
    the 1-Wasserstein distance adapted for the federated setting, allowing the
    strategy to adapt to highly complex interrelations between clients that,
    \eg, clustered approaches fail to capture. We propose an inexact projected
    stochastic gradient algorithm to solve the resulting constrained
    optimization problem, and show theoretically that it converges as $O(1/K)$
    to the neighborhood of a stationary point of a smooth, possibly non-convex,
    loss. We demonstrate the effectiveness of \textsc{Karula} on synthetic and
    real federated data sets.

\end{abstract}

\section{Introduction} 

Federated learning (FL) is a paradigm for training models on decentralized data
that remains with clients in a network coordinated by a central server.
Developed in response to the growing prevalence of private, decentralized data
sources—such as mobile devices and hospital databases—FL enables collaborative
and communication-efficient model training without requiring the exchange of
client data.
One example of an FL algorithm is Federated averaging \citep{McMahan17} where
clients perform local gradient steps on a global model and send updated model
parameters to the server. The server then averages these and returns the
updated global model to the clients. There are many examples where FL has been
applied successfully, leading to greater communication-efficiency and data
privacy, \eg, in mobile keyboard prediction \citep{Hard18}, autonomous driving
\citep{Nguyen22}, and healthcare \citep{Qayyum22}. However, FL faces several
challenges that arise due to client heterogeneity, be it in terms of models,
hardware, or data. In the case of data heterogeneity, the client data follow
different statistical distributions and may have varying number of
observations. This means that the performance of standard FL systems, which
learn a single global model, can suffer significantly in terms of both
convergence and generalization. At the same time, due to limited data,
non-collaboratively training separate models also leads to poor generalization. 

Therefore, in settings where data heterogeneity is significant, federated
learning systems should ideally learn personalized models that borrow strength
from others while respecting local nuances in their data.
Personalized federated learning (PFL) strategies aim to accomplish this through
two primary approaches: (1) globally anchored personalization, where local
models leverage knowledge from a central global model; and (2) peer-based
personalization, where local models learn from similar clients identified at
the server level.
Globally anchored strategies are appropriate in applications where we expect
each client to have limited dissimilarity with any other client in the network,
such as in next-word prediction on mobile devices. However, in applications
such as recommender systems, distributional differences between two given
clients can be very large. Peer-based methods address this issue by attempting
to only learn from clients that are similar. One approach to peer-based
personalization is to cluster clients, prior to training or concurrently. The
cluster model of statistical heterogeneity does however fail to capture more
complex overlaps between clients. As an example, in a music recommendation
application, suppose one user (client) is only interested in jazz and
classical, another in jazz and funk, and yet another in funk and rock. Clearly,
no hard clustering of these users will capture these relationships accurately.
Other approaches to peer-based personalization allow for more complex
relationships between the local models of clients, but to the best of our
knowledge no prior work incorporates distributional similarity measures in a
principled manner. 

Motivated by this, we develop a PFL strategy that uses approximations of the
Wasserstein distance to measure distributional similarity between clients, and
in turn, leverages similarity by constraining the dissimilarities between the
personalized models. Our method approximates the Wasserstein distances in a
manner that does not necessitate the exchanging of client data.


\subsection{Contributions}

The main contributions of this work are threefold:
\begin{itemize}
    \item We propose a peer-based PFL strategy that learns personalized models
        that are centerally constrained to have limited pairwise dissimilarity.
        These constraints are derived in a principled, fully data-driven manner
        using approximations of the Wasserstein distances between empirical
        client distributions. This design is grounded in theoretical insights
        developed in this work.
        
    \item We develop an efficient inexact projected stochastic gradient method
        tailored to our framework. It employs variance reduction to deal with
        partial client participation in each communication round. Due to this
        variance reduction, we are able to show convergence for smooth,
        possibly non-convex losses at rate $O(1/K)$ to an $\epsilon$-stationary
        set of personalized models, where the error $\epsilon$ arises only due
        to stochastic gradients from the clients and inexact projection at the
        server level.
    \item We evaluate our approach on both synthetic and real-world
        heterogeneous datasets across regression and classification tasks. Our
        method is compared against a variety of standard federated and
        personalized FL baselines, demonstrating several distinctive
        advantages.
\end{itemize}

\subsection{Outline}

In \S\ref{sec:related-work} we describe some of the related work, relating our
contribution to existing literature. Then, in \S\ref{sec:method}, we introduce
our proposed strategy for personalized federated learning, which is followed by
a convergence analysis in \S\ref{sec:analysis}. Numerical experiments validate
our proposed method in \S\ref{sec:experiments}. We conclude the paper by
summarizing and discussing our findings in \S\ref{sec:discussion}.

\section{Related work}
\label{sec:related-work} 

Statistical heterogeneity presents significant challenges to both traditional
and distributed learning systems, affecting inference and generalization. It
arises in various applications where data can be assumed to be drawn from
distinct distributions. For example, in meta-analysis, studies with
participants from different populations are aggregated \citep{Melsen14}; in
medicine, different clinical features in patients affect individual health
outcomes \citep{Machado11}; and in finance, markets adopt different regimes
over time \citep{Ang12}. In the context of FL, the statistical heterogeneity
among clients is generally considered one of the defining aspects of the field.
\cite{Zhao18} and \cite{Karimireddy20} show that both convergence
and generalization of conventional FL algorithms are severely affected by
statistical heterogeneity. The heterogeneity introduces difficulties in
optimizing the global model, as local updates cause models to drift, 
reducing the effectiveness of aggregation methods and slowing down convergence.
The former work shows that the 1-Wasserstein distance between
client distributions can be used to quantify this effect.

One approach to address the challenges caused by statistical heterogeneity is
stratification, whereby different models are fit for every value of some
categorical parameter, which are assumed to correspond to the different
underlying distributions. Stratified models allow each subgroup to be modeled
independently, capturing the unique characteristics of each distribution and
reducing the bias that arise from pooling heterogeneous data. In clinical
trials, it is common to do this by grouping participants into different strata
based on relevant clinical features, and fitting a model for each strata
\citep{Kernan99}. Stratification does, however, reduce the number of observations
for each model, limiting the viability of naive stratification in many 
settings. PFL models, which adapt to individual clients rather than enforce a
single global model, are a special case of stratified models where the
stratification is performed at the level of individual clients.

\paragraph{Globally anchored personalization.} A common approach to PFL is to learn
local models that are, in some sense, close to a shared global model.
\cite{Fallah20} take this approach using meta-learning to formulate a method
that learns a global model, from which each client takes a few local gradient
descent steps to produce the local model. Additionally, they show a connection
between the convergence of their method and the statistical heterogeneity
measured in total variation and Wasserstein distance, respectively. Taking a
regularization approach to the problem, the works of \cite{Dinh20} and
\cite{Li21} propose methods that penalize the squared norm difference between
the local and the global model. Yet another approach is given by \cite{Deng20},
whose algorithm learns personalized models that are convex combinations of a
shared global model and a completely local model.

\paragraph{Peer-based personalization.} A more flexible personalization
strategy is to identify clients that are similar and borrow strength from them
directly rather than from a single aggregated model. One way to do this is by
clustering the clients. The PFL method proposed by \cite{Ghosh20} does this
iteratively by alternating between optimizing the cluster models and clustering
the clients, which is done by finding the cluster whose model achieves the
smallest local loss. Regularized methods for peer-based PFL include convex
clustering via a group-lasso penalty on the differences of models
\citep{Armacki22} and Laplacian regularization \citep{Dinh22, Hashemi24}. The
empirical results of \citeauthor{Dinh22} show that in fact Laplacian
regularization with randomized weights outperform globally anchored
personalization strategies on many tasks. 

\paragraph{Similarity measures.} Measures of task and distribution similarity
are of great interest in several learning contexts, particularly transfer
learning \citep{Zhong18, Pan20}, multi-task learning \citep{Shui19}, and
meta-learning \citep{Zhou21}. These measures quantify how closely related
different tasks or data distributions are, influencing decisions on model
adaptation, aggregation, and generalization. Common approaches include
statistical distance metrics such as the Kullback-Leibler divergence
\citep{Kullback51}, Jensen-Shannon divergence \citep{Lin91}, and Wasserstein
distance \citep{Kantorovich60}. In addition, similarity can be assessed through
feature-space representations using methods like cosine similarity or
kernel-based measures. 

\section{Proposed method}
\label{sec:method}

In this section, we introduce \textsc{Karula}\footnote{The name is a
portmanteau of Kantorovich, Rubenstein and Laplace.}, a strategy for federated
learning of personalized models where model differences are constrained by the
distributional similarity of client data. This reinforces the intuitive notion
that clients with similar data should learn similar models. We also propose an
efficient projected stochastic gradient method with variance reduction to
handle partial client participation during communication rounds. This method is
also suitable for use in other FL strategies.

\subsection{Main idea}

Consider an FL setting with $N$ clients, each able to communicate with a
central server. To each client $i \in \{1,\dots,n\}$, we associate an
underlying distribution $\mu_i \colon 2^\mathcal{X} \to \reals_+$, where
$\mathcal{X}$ is the feature space. Each client holds a dataset $\mathcal{D}_i$
of $N_i$ independent samples drawn from $\mu_i$.  The learning goal is to find
a personalized model  $\theta_i \in \Theta$ for each client that minimizes
their expected loss $f_i(\theta_i) = \E_{\xi \sim \mu_i} [\ell(\theta_i,
\xi)]$, where $\ell \colon \Theta \times \mathcal{X} \to \reals$ is a loss
function, and $\Theta$ is the parameter space. However, due to limited
available data on each client, attempting to directly minimize $f_i(\theta_i)$
leads to poor generalization. To address this challenge, we introduce a
regularization strategy that couples the learning tasks across clients by
approximately solving a constrained optimization problem:
\begin{equation}\label{eq:karula-prob}
    \begin{array}{ll} 
        \underset{\theta_i \in \Theta}{\mbox{minimize}} & \frac{1}{n} \sum_{i=1}^{n} \alpha_i f_i(\theta_i) \\
        \mbox{subject to} & \|\theta_i - \theta_j\|_2^2 \leq t D_{ij}, \quad i,j \in \{1,\dots,n\}.
    \end{array}
\end{equation}
Here $t \geq 0$ is a regularization parameter, each $\alpha_i$ is a positive
weight, and each $D_{ij} \geq 0$ is a dissimilarity parameter which quantifies
the dissimilarity between clients $i$ and $j$ that is calculated before solving
the minimization problem. This formulation generalizes the standard FL
formulation, since by selecting $t=0$ the problem is equivalent to choosing a
single global model that minimizes the average loss. In the other extreme, when
$t \to \infty$, we obtain completely local models except for clients that are
deemed to have identical distributions, \ie, when $D_{ij} = 0$. In the sequel,
we show how to choose the similarity parameters in a completely data-driven
manner, based on a surrogate for the Wasserstein distance.


\subsection{Model dissimilarity and Wasserstein distance}\label{ssec:dissim-measures}

To quantify the relationship between data distributions and the corresponding
learned models, we consider the \emph{1-Wasserstein distance}—also known as
the \emph{earth mover's distance} or the \emph{Kantorovich–Rubinstein
distance}. Suppose the data space $\mathcal{X}$ is equipped with a metric
$d_\mathcal{X}$. The 1-Wasserstein distance between two distributions $\mu_i$
and $\mu_j$ is defined as 
\[
    W_1(\mu_i, \mu_j) = \inf_{\pi \in \Pi(\mu_i, \mu_j)} \E_{(\xi, \xi')\sim \pi}[d_\mathcal{X}(\xi, \xi')], 
\]
where $\Pi(\mu_i, \mu_j)$ is the set of all probability measures with
marginals $\mu_i$ and $\mu_j$. This distance is a proper metric on the
space of distributions on $\mathcal{X}$, satisfying symmetry, positive
definiteness, and the triangle inequality \citep{Villani08}.

We will now show that under two mild conditions, the dissimilarities of the
ideal models for two distributions can be bounded by the 1-Wasserstein
distance. The first assumption is that the losses grow at a quadratic rate when
diverting from the ideal model. This condition is satisfied by all losses whose
convex envelopes are strongly convex. The second assumption is that the loss
function is Lipschitz continuous with respect to the data, \ie, a small
difference between two samples imply a small difference in the value of the
loss function $\ell$ at any given model parameters. 

\begin{assumption}\label{asmp:qfg}
    The loss functions $f_i$ have quadratic functional growth with parameter
    $\gamma>0$, meaning for the ideal model $\theta_i^\star =
    \argmin_{\theta_i} f_i(\theta_i)$, the bound $f_i(\theta_i) -
    f_i(\theta_i^\star) \geq (\gamma/2) \|\theta_i - \theta_i^\star\|^2$ holds
    for every $\theta_i \in \Theta$.
\end{assumption}

\begin{assumption}\label{asmp:data-lip}
    The loss function $\ell$ is $L_\mathcal{X}$-Lipschitz continuous in its
    second argument with respect to the metric $d_\mathcal{X}$, meaning
    $|\ell(\theta, \xi) - \ell(\theta, \xi')| \leq L_\mathcal{X}
    d_\mathcal{X}(\xi, \xi')$ for every $\theta \in \Theta$.
\end{assumption}

\begin{remark}
    The latter assumption holds naturally for some loss functions, but for
    others, \eg\ linear regression with quadratic loss, we need to
    require $\Theta$ to be bounded.
\end{remark}

\begin{prop}\label{prop:W1-bound} 
    Suppose Assumptions~\ref{asmp:qfg} and \ref{asmp:data-lip} hold. Then the
    ideal models satisfy
    \begin{equation}\label{eq:W1-bound} 
        \|\theta_i^\star - \theta_j^\star\|_2^2 \leq \frac{2L_\mathcal{X}}{\gamma} W_1(\mu_i, \mu_j). 
    \end{equation} 
\end{prop}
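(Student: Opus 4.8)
The plan is to prove the bound in two stages: first a Kantorovich--Rubinstein-type estimate comparing the two expected-loss functions pointwise, and then a symmetric application of quadratic functional growth that combines these estimates without losing constants.

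First I would establish the pointwise inequality
\[
    |f_i(\theta) - f_j(\theta)| \leq L_\mathcal{X}\, W_1(\mu_i, \mu_j) \quad \text{for every } \theta \in \Theta.
\]
To see this, fix $\theta$ and an arbitrary coupling $\pi \in \Pi(\mu_i, \mu_j)$. Expressing both expectations against the shared coupling gives $f_i(\theta) - f_j(\theta) = \E_{(\xi,\xi')\sim\pi}[\ell(\theta,\xi) - \ell(\theta,\xi')]$, and Assumption~\ref{asmp:data-lip} bounds the integrand by $L_\mathcal{X}\, d_\mathcal{X}(\xi,\xi')$. Taking the infimum over couplings on the right-hand side yields the claim, and the symmetric version follows by swapping $i$ and $j$. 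This route uses only the primal (coupling) definition of $W_1$ together with the Lipschitz property of $\ell(\theta,\cdot)$, so no attainment of an optimal coupling is needed.

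Next I would invoke Assumption~\ref{asmp:qfg} at the two cross points. Applying quadratic functional growth to $f_i$ evaluated at $\theta_j^\star$, and to $f_j$ evaluated at $\theta_i^\star$, gives
\[
    \tfrac{\gamma}{2}\|\theta_i^\star - \theta_j^\star\|_2^2 \leq f_i(\theta_j^\star) - f_i(\theta_i^\star), \qquad
    \tfrac{\gamma}{2}\|\theta_i^\star - \theta_j^\star\|_2^2 \leq f_j(\theta_i^\star) - f_j(\theta_j^\star).
\]
Summing the two inequalities and regrouping the four terms produces $\gamma\|\theta_i^\star-\theta_j^\star\|_2^2 \leq [f_i(\theta_j^\star) - f_j(\theta_j^\star)] + [f_j(\theta_i^\star) - f_i(\theta_i^\star)]$, where each bracket is a pointwise difference $f_i(\cdot) - f_j(\cdot)$ and hence is bounded by $L_\mathcal{X} W_1(\mu_i,\mu_j)$ from the first stage. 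This gives $\gamma\|\theta_i^\star - \theta_j^\star\|_2^2 \leq 2 L_\mathcal{X} W_1(\mu_i,\mu_j)$, which is exactly \eqref{eq:W1-bound}.

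The step I would watch most carefully is this regrouping, since it is the symmetry of the two growth inequalities that delivers the stated constant $2L_\mathcal{X}/\gamma$. A naive one-sided argument that bounds $f_i(\theta_j^\star) - f_i(\theta_i^\star)$ by itself—inserting and removing $f_j$ at both points and discarding the nonpositive term via optimality of $\theta_j^\star$—loses a factor of two and yields only $4L_\mathcal{X}/\gamma$. Using quadratic functional growth symmetrically, so that the two optimal values $f_i(\theta_i^\star)$ and $f_j(\theta_j^\star)$ each pair with the cross-evaluated term of the \emph{other} function, is what avoids this loss; optimality of the ideal models enters only implicitly through the growth bounds and need not be invoked separately.
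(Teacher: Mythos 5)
Your proof is correct and follows essentially the same route as the paper's: a symmetric application of quadratic functional growth at the two cross points $\theta_j^\star$ and $\theta_i^\star$, summed and combined with the pointwise estimate $|f_i(\theta) - f_j(\theta)| \leq L_\mathcal{X} W_1(\mu_i,\mu_j)$ to obtain the constant $2L_\mathcal{X}/\gamma$. The only difference is that you derive that pointwise estimate directly from the coupling (primal) definition of $W_1$, whereas the paper invokes Kantorovich--Rubinstein duality; your argument is slightly more elementary and equally valid.
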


\begin{proof}
    The proof of this result, and all forthcoming results, is presented in
    Appendix~\ref{sec:proof}. 
\end{proof}

\paragraph{Why not estimate model dissimilarity directly?} A straight-forward
approach to shrink the model dissimilarities would be to learn completely local
models first, calculate their pairwise distances, and then use them as the
dissimilarity parameters. So why use a less direct approach with Wasserstein
distances? In the FL setting, it is common for clients to have more features
than observations. For instance, in a linear regression task with $d$ important
features and  $N$ observations per client, it is well known that the minimax
squared estimation error is $O(d/N)$ \citep{Bickel09, Raskutti11}. When $N<d$,
no meaningful upper bound on the estimation error can be guaranteed without
additional structural assumptions. The situation worsens for overparameterized
models, where the number of parameters exceeds the number of features. In
contrast, the Wasserstein distance between two empirical distributions has a
rate of $O(N^{-1/d})$ \citep[Example 1]{Wang21}. While this rate is very slow
in high dimensions, it clearly performs much better than $O(d/N)$ when $N$ is
on a similar scale as $d$, or is smaller. We show empirically in
\S\ref{ssec:synthetic} how our approach is better at estimating the true
dissimilarities.

\paragraph{Choosing the data metric.} In supervised learning, a natural way to
define the data metric $d_\mathcal{X}$ is to concatenate the feature vector and
response, and apply a standard norm—such as the Euclidean distance—on the joint
space. An alternative approach, proposed by \citet{Elhussein24}, combines
cosine similarity for the features with the Hellinger distance for the
responses. They argue that cosine-similarity is more informative in
high-dimensional feature spaces due to the concentration of measures
phenomenon, and that the Hellinger distance compares densities within their
statistical manifold, potentially yielding more meaningful distinctions between
distributions.

\paragraph{Calculating the dissimilarity parameters.} With the present
motivation in mind, it is natural to set the dissimilarity parameters
proportional to the Wasserstein distance between the empirical distributions,
$W_1(\hat \mu_i, \hat \mu_j)$, where $\hat \mu_i$ is the empirical distribution
of client $i$th. While  \cite{Rakotomamonjy23} propose an iterative method to
approximate these distances in a federated manner, their approach requires
computing $n(n-1)/2$ optimal transport plans per iteration—an operation that is
computation and communication-intensive. 

To reduce this cost, we propose to approximate the Wasserstein distances via
linear embedding of the data sets, following prior work such
as~\cite{Kolouri20} and \cite{Liu25}. Specifically, we introduce a reference
data set of $N_0$ independent samples $\mathcal{D}_0 = \{\xi_1^0, \dots,
\xi_{N_0}^0\}$ drawn from a reference distribution $\mu_0$. The optimal
transport plan between the empirical distributions $\hat \mu_0$ and $\hat
\mu_i$ is then
\[
    \pi_i^\star = \argmin_{\pi \in \Pi(\hat \mu_0, \hat \mu_i)} \sum_{j=1}^{N_0} \sum_{k=1}^{N_i} \pi_{jk} d_\mathcal{X} (\xi_j^0, \xi_k^i).
\]
By approximating the \emph{Monge map} via $M_i = N_0 \pi_i^\star
\mathcal{D}_i$, we can define an embedding $\Phi$ by $\Phi(\mathcal{D}_i) =
(M_i - \mathcal{D}_0) / \sqrt{N_0}$ that has the property that
$\|\Phi(\mathcal{D}_i) - \Phi(\mathcal{D}_j)\|_1$ defines a metric on the space
of data sets, and indeed approximates the Wasserstein distance $W_1(\hat \mu_i,
\hat \mu_j)$. We thus choose the dissimilarity parameters as $D_{ij} =
\|\Phi(\mathcal{D}_i) - \Phi(\mathcal{D}_j)\|_1$. By doing this, each client
only has to compute a single optimal transport plan. Also, on the communication
side, only a reference data set and the embeddings need to be transmitted over
the network once, totaling $O(N_0dn)$ communication.


\subsection{Algorithm}

To solve the \textsc{Karula} problem \eqref{eq:karula-prob}, we introduce a
projected gradient algorithm that supports partial client participation in each
training round. To eliminate the error introduced by this partial
participation, our algorithm uses variance reduction, taking inspiration from
methods such as \textsc{Saga} \citep{Defazio14} and \textsc{Fedvarp}
\citep{Jhunjhunwala22}. Our approach can be viewed as a relaxation of standard
distributed stochastic gradient method: rather than enforcing full mode
agreement, each iteration shrinks model dissimilarities based on pairwise
constraints. Specifically, we consider the following formulation of the
\textsc{Karula} problem \eqref{eq:karula-prob}:
\[
    \begin{array}{ll} 
        \underset{\theta\in \mathcal{K}}{\mbox{minimize}} & f(\theta) = \frac{1}{n} \sum_{i=1}^n \alpha_i f_i(\theta_i), \quad \mathcal{K} = \{ \theta \in \Theta^n \colon \|\theta_i - \theta_j\|^2 \leq t D_{ij} \}.
    \end{array} 
\]
Each client $i$ serves as a stochastic first-order oracle $G_i$ that returns a
stochastic gradient when queried with a model $\theta_i$. Based on this
formulation, we apply the stochastic projected gradient method outlined in
Algorithm~\ref{algo:karula}. In each training round, a subset of clients
returns stochastic gradients to the server, which then performs a variance
reduced gradient step with respect to $f$, and approximately projects the
iterate onto the constraint set $\mathcal{K}$. 

The projection $\Pi_\mathcal{K} (\vartheta)$ of a collection of models
$\vartheta \in \Theta^n$ onto $\mathcal{K}$ does not admit an
analytical solution, so we use an iterative method to approximate it. To this
end, we define the \emph{$\delta$-inexact projection}
$\Pi_\mathcal{K}^\delta\colon \Theta^n \to \mathcal{K}$ that returns a feasible
collection of models $\theta = \Pi_\mathcal{K}^\delta (\vartheta)$ which is at
most $\delta$-suboptimal in the projection problem, meaning 
\[
    \frac{1}{2 \eta} \|\theta - \vartheta\|^2 \leq \delta + \frac{1}{2 \eta} \min_{v \in \mathcal{K}} \|v - \theta\|^2
\]
for the learning rate $\eta > 0$. We describe how to perform the projection
computation in the supplemental material.

\begin{algorithm}
\caption{\sc Karula}
\begin{algorithmic}
    \label{algo:karula}
    \STATE \textbf{Setup:} Initializations $(\phi_1^0, \dots, \phi_n^0)$, step
    size $\eta$, participation size $s$, number of iterations $K$. 

    \STATE Server receives initializations $\phi_i^0$ and stochastic gradients
    $G_i(\phi_i^0)$ from clients $i\in\{1,\dots,n\}$, and forms $\theta^0 =
    (\phi_1^0, \dots, \phi_n^0)$ and $g^0 = (G_1(\phi_1^0), \dots,
    G_n(\phi_n^0)) / n$. 

    \FOR{$k = 0, \dots, K-1$}
        \STATE Server samples uniformly a subset $S_k \subset \{1,\dots,n\}$ of
        $s$ clients and sends $\theta_i^{k}$ to client $i \in S_k$.

        \STATE Client $i\in\{1,\dots,n\}$ updates
        \[
            \phi_i^{k+1} = \begin{cases}
                \theta_i^{k} & \text{if } i \in S_k, \\
                \phi_i^k     & \text{otherwise}. 
            \end{cases}
        \]

        \STATE Client $i \in S_k$ returns $G_i(\phi_i^{k+1})$ to the
        server. 

        \STATE Server sets
       \[
            g_i^{k+1} = \begin{cases}
                G_i(\phi_i^{k+1}) & \text{if } i \in S_k, \\
                g_i^k             & \text{otherwise}. 
            \end{cases}
        \]
        and updates $\nu^k = g^k + (g^{k+1} - g^k) / s$.

        \STATE Server performs inexact projected gradient step $\theta^{k+1} =
        \Pi_\mathcal{K}^\delta (\theta^k - \eta \nu^k)$.

    \ENDFOR
\end{algorithmic}
\end{algorithm}

\section{Convergence analysis}
\label{sec:analysis}

In this section, we analyze the convergence properties of our proposed
algorithm for solving Problem \eqref{eq:karula-prob}, both in the general
smooth case when the losses have Lipschitz continuous gradients. The analysis
is inspired by the work of \cite{Reddi16}, which we have adapted for the FL
setting, and extended to allow for inexact projections and stochastic gradients
of the individual client losses $f_i$. 

Our goal in the smooth, possibly nonconvex case, is to find a local minimizer,
but since the problem is constrained, the ordinary stationarity condition $0 =
\nabla f(\theta)$ is not appropriate. In lieu, the analogous condition for
constrained optimization is that $\theta$ is a zero of the \emph{gradient
mapping} \citep[\cf][]{Nesterov18}
\[
    \mathbf{G}_\eta (\theta) = \frac{1}{\eta} \left( \theta - \Pi_\mathcal{K} (\theta - \eta \nabla f(\theta) \right).    
\]
Informally, we can think of the condition $0 = \mathbf{G}_\eta (\theta)$ as
saying one of two things: either $\theta$ is a feasible point with zero
gradient, or it is a point such that the negative gradient is normal to the
feasible set.

To establish the convergence results, we make a few standard assumptions. First,
we assume smoothness of the losses.

\begin{assumption}
    \label{asmp:smooth}
    The weighted losses $\alpha_i f_i$ are $L$-smooth, \ie they are
    continuously differentiable with $L$-Lipschitz continuous gradients:
    $\alpha_i \|\nabla f_i(\theta_i) - \nabla f_i(\theta_i')\| \leq L
    \|\theta_i - \theta_i'\|$ for all $\theta_i$ and $\theta_i'$.
\end{assumption}

Second, we assume that the stochastic gradients have bounded mean squared
error.

\begin{assumption}
    \label{asmp:bounded-variance}
    The stochastic gradients returned by the oracles $G_i$ have bounded mean
    squared error: $\E \|G_i(\theta_i) - \alpha_i \nabla f_i(\theta_i)\|^2 \leq
    \sigma^2$ for every $\theta_i$.
\end{assumption}

%
%

Under the two former assumptions, the proposed algorithm enjoys sublinear
convergence to an $\epsilon$-stationary point where $\epsilon$ reflects the
effect of stochastic gradients and inexact projections. Here, we assume
feasible initialization only for the sake of exposition, which is trivially
satisfied by assigning the same initial model to all clients.

\begin{thm}\label{thm:nonconvex}
    Under Assumptions~\ref{asmp:smooth} and \ref{asmp:bounded-variance}, with
    feasible initialization, step size $\eta = 3 / (8L)$ and $2 \leq s \leq n
    - 1$, the iterates of Algorithm~\ref{algo:karula} satisfy 
    \[
        \min_{k=0,\dots,K-1} \E\|\mathbf{G}_\eta (\theta^k)\|^2 \leq \frac{8L}{3} \left( \frac{f(\theta^0) - f(\theta^\star)}{K} + \epsilon \right),
    \]
    where $\epsilon = 4\sigma^2(s+1) / s + 2 \delta$.

\end{thm}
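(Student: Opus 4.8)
The plan is to adapt the projected-stochastic-gradient template for smooth nonconvex problems (as in \cite{Reddi16}) while propagating three distinct error sources—oracle noise, partial participation handled by variance reduction, and the inexact projection—through a single potential function. First I would use that $f$ inherits $L$-smoothness from the weighted losses $\alpha_i f_i$ to write
\[
    f(\theta^{k+1}) \leq f(\theta^k) + \langle \nabla f(\theta^k),\, \theta^{k+1}-\theta^k\rangle + \frac{L}{2}\|\theta^{k+1}-\theta^k\|^2 .
\]
Because $\theta^{k+1}=\Pi_\mathcal{K}^\delta(\theta^k-\eta\nu^k)$, I would invoke the variational inequality for the exact projection onto the convex set $\mathcal{K}$ tested at $v=\theta^k$, together with the $\delta$-suboptimality guarantee of the inexact projection, to obtain $\langle \nu^k,\theta^{k+1}-\theta^k\rangle \leq -\tfrac{1}{\eta}\|\theta^{k+1}-\theta^k\|^2 + O(\delta)$. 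Splitting $\nabla f(\theta^k)=\nu^k+(\nabla f(\theta^k)-\nu^k)$ and applying Young's inequality to the residual inner product then yields a one-step descent of the form $f(\theta^{k+1})\leq f(\theta^k)-c_1\|\theta^{k+1}-\theta^k\|^2+c_2\|\nu^k-\nabla f(\theta^k)\|^2+c_3\delta$.

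Next I would convert the realized displacement into the gradient mapping. Writing $\hat\theta^{k+1}=\Pi_\mathcal{K}(\theta^k-\eta\nabla f(\theta^k))$ so that $\mathbf{G}_\eta(\theta^k)=(\theta^k-\hat\theta^{k+1})/\eta$, nonexpansiveness of the exact projection gives $\|\hat\theta^{k+1}-\Pi_\mathcal{K}(\theta^k-\eta\nu^k)\|\leq\eta\|\nabla f(\theta^k)-\nu^k\|$, while $\delta$-suboptimality, via strong convexity of the projection objective, places $\theta^{k+1}$ within $O(\sqrt{\eta\delta})$ of the exact projection. A triangle-inequality chain between these points, squared and combined with the previous step, reduces everything to the estimator error:
\[
    \E[f(\theta^{k+1})] \leq \E[f(\theta^k)] - c_4\,\E\|\mathbf{G}_\eta(\theta^k)\|^2 + c_5\,\E\|\nu^k-\nabla f(\theta^k)\|^2 + c_6\delta .
\]

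The crux is controlling $\E\|\nu^k-\nabla f(\theta^k)\|^2$, and this is where I expect the genuine difficulty. I would first compute the conditional moments of the \textsc{Fedvarp}-style estimator $\nu^k=g^k+(g^{k+1}-g^k)/s$ over the uniform subset $S_k$—sampling without replacement, which is where the constraint $2\leq s\leq n-1$ and the factor $(s+1)/s$ originate—and over the oracle noise, bounded via $\E\|G_i(\theta_i)-\alpha_i\nabla f_i(\theta_i)\|^2\leq\sigma^2$. The resulting bound splits into an irreducible $\sigma^2$ part and a \emph{staleness} part measuring how far the stored evaluation points $\phi_i^k$ have drifted from the current $\theta_i^k$; note that the blocks of non-participating clients still move, since the projection couples them through the constraints $\|\theta_i-\theta_j\|^2\leq tD_{ij}$. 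To absorb the staleness I would work with the potential $\Phi^k=\E[f(\theta^k)]+\beta\sum_i\E\|\phi_i^k-\theta_i^k\|^2$ (equivalently a stored-gradient-error term), show that its expected growth per round contracts by a factor tied to $s/n$ and is dominated by a fraction of $\|\theta^{k+1}-\theta^k\|^2$, and tune $\beta$ and $\eta=3/(8L)$ so the staleness cancels against the function decrease, leaving only $\sigma^2$ and $\delta$ contributions. The delicate point is making this cancellation participation-independent, so that the surviving noise term is $O(\sigma^2)$ rather than one that blows up as $s$ shrinks.

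Finally I would telescope the resulting potential inequality $\Phi^{k+1}\leq\Phi^k-c_4\E\|\mathbf{G}_\eta(\theta^k)\|^2+c_5\sigma^2(s+1)/s+c_6\delta$ over $k=0,\dots,K-1$, using that the feasible initialization gives $\theta^0\in\mathcal{K}$ with zero staleness so that $\Phi^0=f(\theta^0)$, and that $\Phi^K\geq f(\theta^\star)$. Bounding the minimum by the average and substituting $c_4=\eta=3/(8L)$ then produces the stated $8L/3$ prefactor and $\epsilon=4\sigma^2(s+1)/s+2\delta$. The first, second, and last steps are essentially careful bookkeeping once the inexact-projection slack is tracked; the variance-reduction step is the main obstacle.
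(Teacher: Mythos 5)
Your plan is correct and follows essentially the same route as the paper's proof: a smoothness/projection descent inequality with the $\delta$-slack tracked explicitly, a variance-reduction lemma splitting $\E\|\nu^k-\nabla f(\theta^k)\|^2$ into an irreducible $O(\sigma^2(s+1)/s)$ term plus a staleness term $\E\|\theta^k-\phi^k\|^2$, and a Lyapunov function $f(\theta^k)+c_k\|\theta^k-\phi^k\|^2$ whose coefficient is tuned (the paper uses a backward recursion with $c_K=0$ rather than your fixed $\beta$, but the resulting bound $c_k\leq 2\eta L^2 n/(s(n^2-s^2))$ plays the same role) so that the staleness cancels against the function decrease at $\eta=3/(8L)$. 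The only small inaccuracy is that the condition $2\leq s\leq n-1$ enters not through the sampling variance but through showing $s^2(n^2-s^2)/n^2>1$, which is what keeps the Lyapunov coefficient on $\|\theta^{k+1}-\theta^k\|^2$ nonpositive.
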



%

\begin{remark}
    In contrast to many other FL algorithms, our convergence analysis does not
    require any bounds on client heterogeneity or assumptions about the
    underlying data distributions. Instead, it only relies on standard machine
    learning assumptions. Moreover, the step size that is used to achieve the
    bound is comparatively generous, and works well in practice provided that
    the smoothness constant $L$ is known or can be effectively estimated.
\end{remark}

\begin{remark}
    While it is common to assume exact projections in theory, this is not
    particularly realistic. Our analysis explicitly accounts for projection
    errors and quantifies how these affect the size of the neighborhood that the
    algorithm converges to. This has practical implications, since it informs
    the choice of the error $\delta$ to trade off the iteration cost with
    proximity to stationarity.
\end{remark}

\section{Numerical experiments}
\label{sec:experiments}

We are now ready to study the performance of \textsc{Karula} empirically and
compare it against several existing conventional and personalized FL
strategies. First, in \S\ref{ssec:synthetic}, we study ridge regression on
synthetic, heterogeneous data where the true model parameters are known. Then,
in \S\ref{ssec:femnist}, we evaluate the performance of the different methods
on a federated handwritten digit classification task using neural network
models. For simplicity, we use the Euclidean norm on the joint space of the
features and labels for the dissimilarity parameter computation. The
experiments in \S\ref{ssec:synthetic} and \S\ref{ssec:femnist} were performed
on an Intel Core i9 processor at 3.2 GHz, and on a cluster with Intel Xeon
processors at 3.10 GHz, respectively.

\paragraph{Benchmark strategies.} We compare \textsc{Karula} with three FL
baselines: local non-collaborative learning, federated averaging with five
local iterations, and IFCA. For all methods and experiments, we weight the
client loss functions by the number of local samples.

\subsection{Synthetic data}\label{ssec:synthetic}

We begin by comparing the performance of the FL strategies on a ridge regression
problem with $d=50$ features. The ridge penalty parameter is set to $\lambda =
10^{-6}$ to ensure a unique solution. One third of the clients participate at
every round, and clients return full gradients.

\paragraph{Data generation.} We simulate a network of $n=30$ clients, where the
ground-truth model parameters for a third of the clients are drawn from a
normal distribution with mean $1$, another third from a normal distribution
with mean $1.5$, and the final third from a normal distribution with mean $2$.
Feature matrices are also drawn from distinct normal distributions across the
clients. Each client draws $N_i$ samples from their assigned distribution,
where $N_i$ is uniformly distributed between 10 and 100. The noise level is
tuned so that the global model outperforms the completely local models.

\paragraph{Hyperparameter selection.} To set a level playing field for the
comparison, we use 5-fold cross-validation to select the regularization
parameters of IFCA and \textsc{Karula}. While this tuning approach may not
always be practical in large-scale FL deployments, it provides a controlled
benchmark setting for our experiments. The learning rate $\eta$ is set to
$1/(10L)$, $1/(2L)$ and $3/(8L)$ for \textsc{FedAvg}, IFCA and \textsc{Karula},
respectively. 

\paragraph{Results.} Table~\ref{tab:synthetic} summarizes the results of the
synthetic ridge regression experiments, reporting both estimation error and
test $R^2$. \textsc{Karula} clearly outperforms the baselines, achieving the
lowest estimation error and highest predictive performance. While
\textsc{FedAvg} and \textsc{IFCA} improve substantially over fully local
models, they fail to capture the underlying heterogeneity among clients. In
particular, \textsc{IFCA} does not discover meaningful client clusters and
effectively collapses to learning a single global model. To better understand
this behavior, Figure~\ref{fig:dist-heatmaps} compares the true pairwise model
dissimilarities with the dissimilarity parameters used by \textsc{Karula} and
those estimated directly from local models. As shown, the dissimilarity
structure inferred by \textsc{Karula} matches the ground truth closely, whereas
direct estimation from local models fails to recover the correct relational
structure. This result supports our argument from \S\ref{ssec:dissim-measures}
that Wasserstein-based approximations provide more reliable measures of client
similarity in low-sample regimes.


\begin{table}[ht]
\centering
\caption{
    Mean estimation error and test $R^2$ of the different strategies on
    synthetic ridge regression experiment, with $\pm 2\times\text{SE}$. 
}
\begin{tabular}{lcc}
    \toprule
    Strategy   & Estimation error & Test $R^2$ \\
    \midrule
    Local             &     35.33 $\pm$ 2.729 &     0.692 $\pm$ 0.108 \\
    \textsc{FedAvg}   &      7.47 $\pm$ 0.493 &     0.846 $\pm$ 0.026 \\
    \textsc{IFCA}     &      7.69 $\pm$ 0.509 &     0.821 $\pm$ 0.024 \\
    \textsc{Karula}   & \bf  5.86 $\pm$ 0.204 & \bf 0.938 $\pm$ 0.009 \\
    \bottomrule
\end{tabular}
\label{tab:synthetic}
\end{table}

\begin{figure}[htbp]
    \centering
    \begin{subfigure}[b]{0.32\textwidth}
        \includegraphics[width=\textwidth]{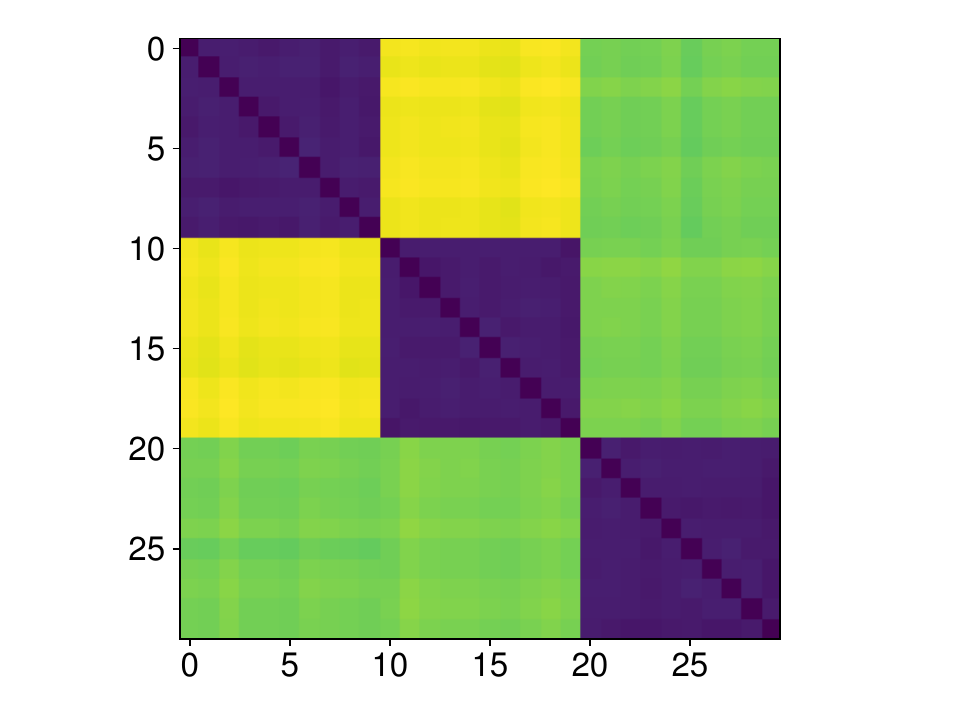}
        \caption{Ground truth}
        \label{fig:sub1}
    \end{subfigure}\hfill
    \begin{subfigure}[b]{0.32\textwidth}
        \includegraphics[width=\textwidth]{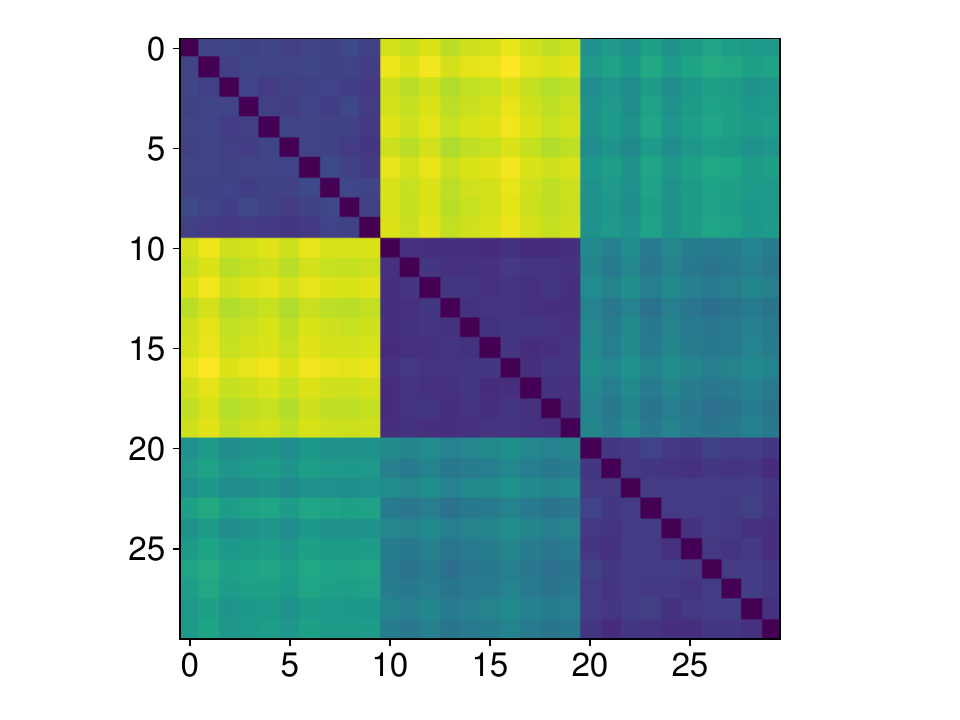}
        \caption{\textsc{Karula}}
        \label{fig:sub2}
    \end{subfigure}\hfill
    \begin{subfigure}[b]{0.32\textwidth}
        \includegraphics[width=\textwidth]{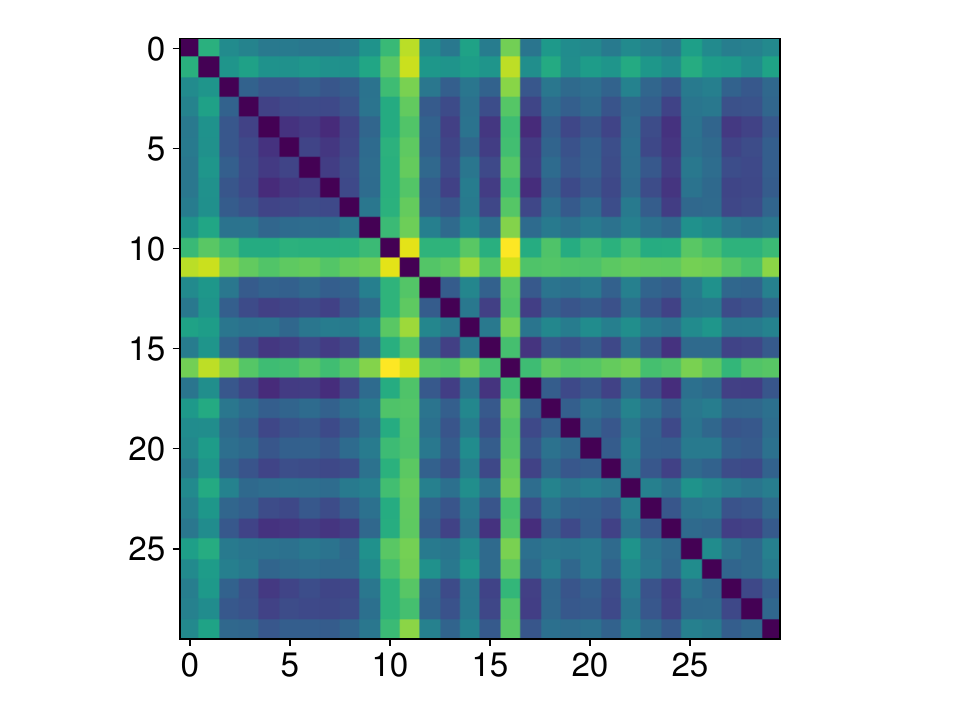}
        \caption{Local}
        \label{fig:sub3}
    \end{subfigure}
    \caption{
      Heatmaps showing the true model dissimilarities, the dissimilarity
      parameters calculated by \textsc{Karula}, and the model dissimilarities
      estimated via local learning, respectively.
    }
    \label{fig:dist-heatmaps}
\end{figure}

%


\subsection{Federated MNIST}\label{ssec:femnist}

Next, we run experiments on a federated version of the MNIST dataset
\citep{Caldas19}, where the task is to classify handwritten digits
$\{0,\dots,9\}$. Each client holds data of handwritten digits corresponding to
one individual writer. Here, heterogeneity stems from idiosyncratic
handwriting: \eg, some write large, others small; some write ``1'' with serifs,
others write without. We compare the performances of the different strategies
on a network of $n=30$ clients, where half of them participate in every round.
The clients return minibatch stochastic gradients with a batch size of 64.

\paragraph{Model architecture and personalization.} We use a two-layer
perceptron model with ReLU activation and a hidden layer of 100 neurons.
Following the personalization strategy in \citet{Arivazhagan19} and the
experimental setup of \citet{Ghosh20}, we apply personalization only to the
second (output) layer of the model.

\paragraph{Hyperparameter selection.} For \textsc{FedAvg} and IFCA we use $\eta
= 10^{-4}$, whereas for the local model and \textsc{Karula} we use $\eta =
2\times 10^{-4}$. For IFCA, we specify three clusters~\citep[following \S6.3
in][]{Ghosh20}. For \textsc{Karula}, the pairwise dissimilarity threshold is
set to $t=20$.

\paragraph{Results.} Figure~\ref{fig:femnist_results} shows the training loss
and test performance of the different strategies. The final average test
accuracy is 91.3$\pm$0.75\% for \textsc{Karula}, 89.3$\pm$0.78\% for local
learning, 84.4$\pm$2.39\% for \textsc{FedAvg}, and 83.9$\pm$2.60\%,
respectively. Notably, \textsc{Karula} not only yields better average accuracy,
but also exhibits lower variance in the results, even though IFCA and
\textsc{FedAvg} use smaller learning rates. To illustrate the interpretability
of the dissimilarity parameters computed by \textsc{Karula},
Figure~\ref{fig:client-graph} visualizes the pairwise relationships among a
subset of three clients. We clearly see how the learned dissimilarity structure
reflects meaningful variations in the client data distributions, aligning with
the underlying differences in handwriting.

\begin{figure}[ht]
    \centering
    \begin{subfigure}[b]{0.45\textwidth}
        \centering
        \includegraphics[width=\textwidth]{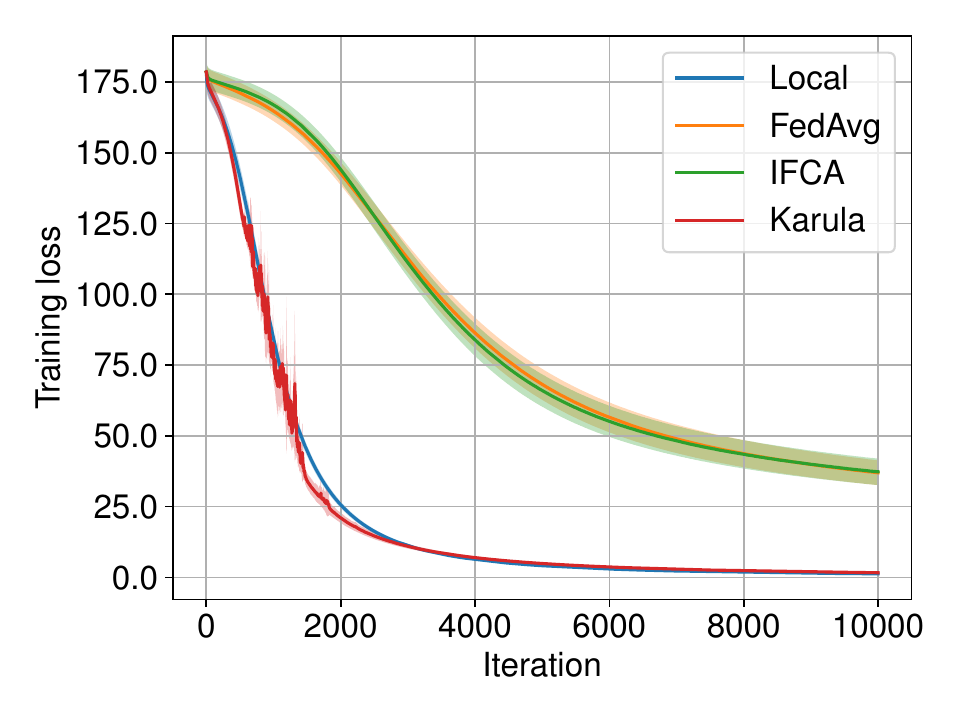}
        \caption{Training loss on federated MNIST.}
        \label{fig:train_loss}
    \end{subfigure}
    \begin{subfigure}[b]{0.45\textwidth}
        \centering
        \includegraphics[width=\textwidth]{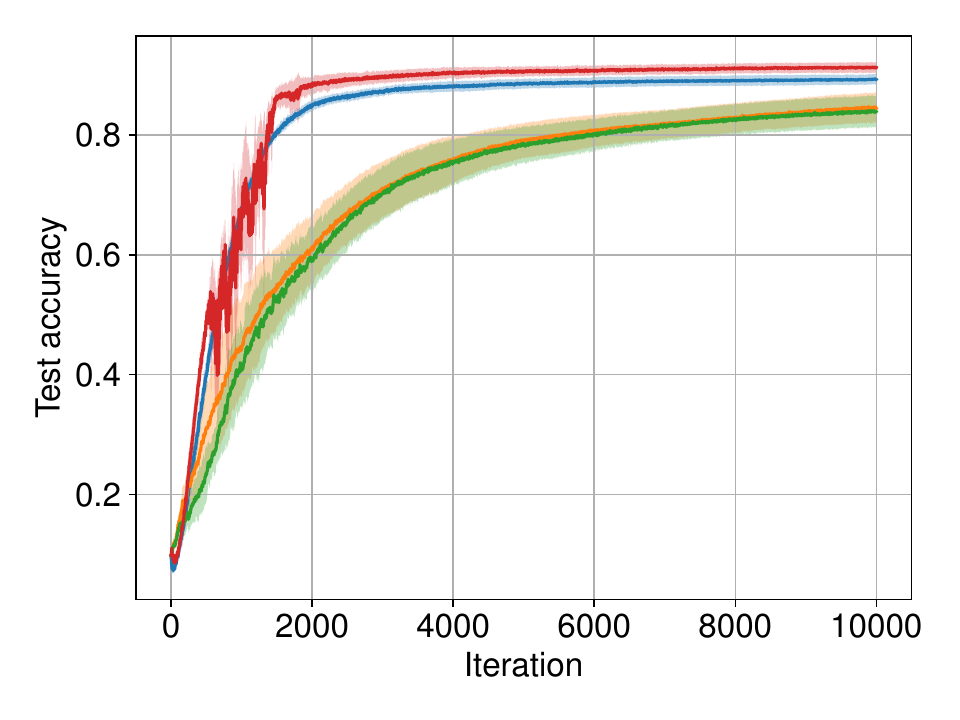}
        \caption{Test accuracy on federated MNIST.}
        \label{fig:test_accuracy}
    \end{subfigure}
    \caption{
        Training and test performance with $\pm 2\times\text{SE}$ bands of FL
        strategies on classification task with 2NN model on Federated MNIST
        dataset with $n=30$ clients. 
    }
    \label{fig:femnist_results}
\end{figure}

\begin{figure}[ht]
    \centering
    \includegraphics[width=0.5\textwidth]{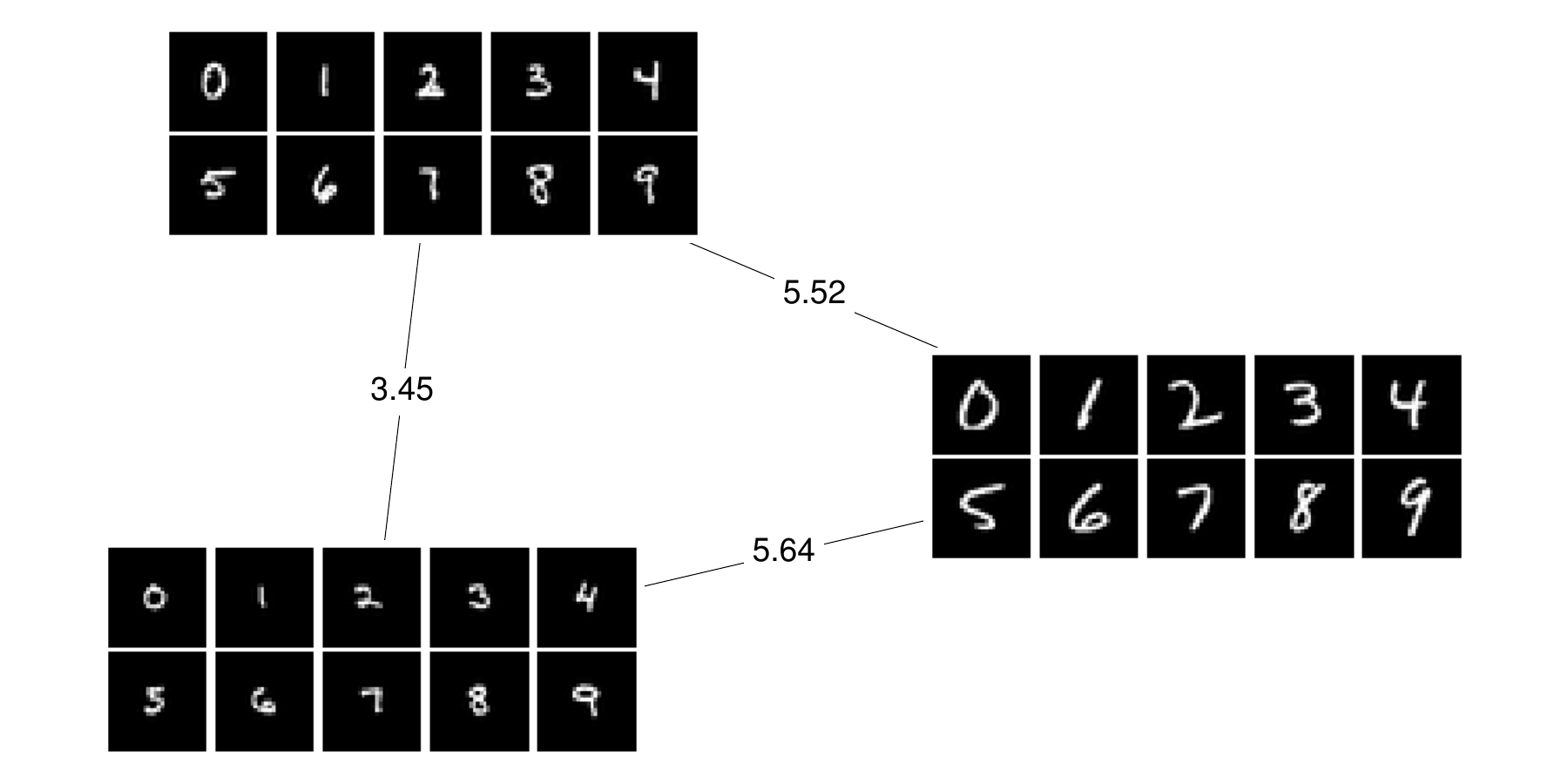}
    \caption{
        Example handwritten digits from three clients and the dissimilarity
        parameters $D_{ij}$ between them. The clients were chosen by randomly
        sampling one, and subsequently finding the nearest and most remote
        neighbor, respectively. 
    }
    \label{fig:client-graph}
\end{figure}

%

\section{Discussion}\label{sec:discussion}

We have introduced \textsc{Karula}, a personalized federated learning (PFL)
strategy based on constrained optimization and data-driven measures of client
dissimilarity. At the core of our approach is a surrogate for the 1-Wasserstein
distance between client data distributions, which we compute efficiently via
linear embeddings without any exchange of raw client data. This embedding-based
dissimilarity offers a scalable and communication-efficient way to capture
structural variation across clients. Under mild conditions on the loss
functions, we proved that the distance between ideal personalized models can be
bounded in terms of the Wasserstein distance between client distributions. This
theoretical connection justifies constraining pairwise model dissimilarities as
a means of improving generalization. To solve the resulting constrained
problem, we proposed a projected stochastic gradient algorithm that supports
partial client participation and incorporates variance reduction. We show that
the method converges to an $\epsilon$-stationary point under standard
assumptions, even for nonconvex losses. Empirical results on both synthetic and
real-world federated datasets demonstrate the effectiveness of our approach. On
the synthetic task, the learned dissimilarity parameters closely align with the
true model distances, leading to superior personalized performance. On the
federated MNIST benchmark, we could see that \textsc{Karula} can perform well
on a real federated dataset, and that the algorithm converges quickly even when
dealing with a nonconvex loss.

\paragraph{Limitations.} The model for communication constraints used in our
analysis, although common in the federated learning literature, is not entirely
realistic. It ignores the possibility of some clients communicating with the
server significantly less often than others. Moreover, our accounting for
inexact projections still assumes feasibility of the approximation, which is
not always guaranteed in practice. 

\bibliography{references}

\newpage
\appendix

\section{Proofs of results}
\label{sec:proof}

\subsection{Proof of Proposition~\ref{prop:W1-bound}}
\begin{proof}
    Under Assumptions~\ref{asmp:qfg}, the ideal models $\theta_i^\star$, $i \in
    \{1, \dots, n\}$, satisfy 
    \[
        \begin{aligned}
            \frac{\mu}{2}\|\theta_i^\star - \theta_j^\star\|^2 &\leq f_i(\theta_j^\star) - f_i(\theta_i^\star), \\
            \frac{\mu}{2}\|\theta_i^\star - \theta_j^\star\|^2 &\leq f_j(\theta_i^\star) - f_j(\theta_j^\star)
        \end{aligned}
    \]
    Rearranging and adding the two inequalities, we have that
    \begin{equation}\label{eq:ineq}
         \begin{aligned}
            \|\theta_i^\star - \theta_j^\star\|^2 \leq \frac{1}{\mu} \left( f_i(\theta_j^\star) - f_j(\theta_j^\star) + f_j(\theta_i^\star) - f_i(\theta_i^\star) \right).
        \end{aligned}
    \end{equation}
    Define $\mathcal{L}_1$ as the space of real-valued $1$-Lipschitz functions
    on the data space $\mathcal{X}$. Then, by Kantorovich--Rubenstein Duality
    \citep[Theorem~5.10]{Villani08}, it holds for all $\theta$ that 
    \[
        \begin{aligned}
            f_i(\theta) - f_j(\theta) &= \E_{\xi \sim \mu_i} [\ell(\theta, \xi)] - \E_{\xi \sim \mu_j} [\ell(\theta, \xi)] \\
                                      &\leq L_\mathcal{X} \sup_{g \in \mathcal{L}_1} \left\{ \E_{\xi \sim \mu_i} [g(\xi)] - \E_{\xi \sim \mu_j} [g(\xi)] \right\} \\
                                      &= L_\mathcal{X} W_1(\mu_i, \mu_j),
        \end{aligned}
    \]
    where Assumption~\ref{asmp:data-lip} was used in the inequality. Combining
    this inequality with \eqref{eq:ineq}, we obtain the stated bound.
\end{proof}

\subsection{Technical lemmas}

\begin{lem}\label{lem:var-red}
    Under Assumption~\ref{asmp:smooth} and \ref{asmp:bounded-variance}, the
    iterates $\theta^k$, $\phi^k$ and $\nu^k$ of Algorithm~\ref{algo:karula}
    satisfy
    \[
        \E \|\nu^k - \nabla f(\theta^k)\|^2 \leq \frac{2L^2}{ns} \E \|\theta^k - \phi^k\|^2 + \frac{4(s+1)}{s} \sigma^2.
    \]
\end{lem}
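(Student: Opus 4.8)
The plan is to condition on the natural filtration $\mathcal{F}^k$ generated by all randomness before the $k$-th projected step, so that $\theta^k$, $\phi^k$, and the stored gradient table are $\mathcal{F}^k$-measurable, and the only fresh randomness is the uniform draw of the size-$s$ subset $S_k$ (giving $\prob(i \in S_k) = s/n$) together with the newly queried stochastic gradients $G_i(\theta_i^k)$ for $i \in S_k$, which are conditionally unbiased for $\alpha_i \nabla f_i(\theta_i^k)$ with conditional second moment at most $\sigma^2$ (Assumption~\ref{asmp:bounded-variance}) and independent of the sampling. First I would verify that the variance-reduced direction $\nu^k$ is conditionally unbiased, $\E[\nu^k \mid \mathcal{F}^k] = \nabla f(\theta^k)$; this is the standard \textsc{Saga}/\textsc{Fedvarp} cancellation and uses precisely that each client is refreshed with probability $s/n$, so the weighted correction compensates the stale table exactly.

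The core idea is a two-stage comparison. Introduce the idealized direction $\bar\nu^k$ obtained from $\nu^k$ by replacing every stochastic gradient---both the freshly queried ones and those frozen in the table---by its exact counterpart $\alpha_i \nabla f_i(\cdot)$. Then I would write $\nu^k - \nabla f(\theta^k) = (\nu^k - \bar\nu^k) + (\bar\nu^k - \nabla f(\theta^k))$ and apply $\|a+b\|^2 \leq 2\|a\|^2 + 2\|b\|^2$; the factor $2$ here is what produces the $2$ and the $4$ in the stated constants. The second term, $\bar\nu^k - \nabla f(\theta^k)$, is the sampling error of estimating $\nabla f(\theta^k)$ by refreshing only $s$ of the $n$ exact-but-stale per-client blocks. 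Because the problem is block-separable across clients, its conditional second moment is computed block by block as $\E\|\cdot\|^2 - \|\E\,\cdot\,\|^2$ for a without-replacement indicator, and I would then invoke $L$-smoothness (Assumption~\ref{asmp:smooth}) in the form $\|\alpha_i \nabla f_i(\theta_i^k) - \alpha_i \nabla f_i(\phi_i^k)\| \leq L\|\theta_i^k - \phi_i^k\|$ and sum, yielding a bound of order $\tfrac{L^2}{ns}\|\theta^k - \phi^k\|^2$---the source of the first term.

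For the first term, $\nu^k - \bar\nu^k$, only gradient noise survives: a fresh noise $\zeta_i = G_i(\theta_i^k) - \alpha_i \nabla f_i(\theta_i^k)$ for $i \in S_k$, and a stale noise $\beta_i$ frozen in the table when client $i$ was last sampled. I would expand the squared norm block by block; the cross terms involving $\zeta_i$ vanish in conditional expectation by zero mean and independence from $S_k$, and what remains is controlled using $\E\|\zeta_i\|^2, \E\|\beta_i\|^2 \leq \sigma^2$ together with $\prob(i\in S_k) = s/n$, which contributes the $\sigma^2/s$-type noise floor. Combining the two bounds through the Young split and taking total expectation gives the claim.

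The main obstacle is the second-moment computation over the random subset $S_k$: one must handle the without-replacement structure---the negative covariances between the indicators $\mathbf{1}\{i\in S_k\}$---and keep the two independent noise sources cleanly separated so that the cross terms cancel and the constants collapse onto $\tfrac{2L^2}{ns}$ and $\tfrac{4(s+1)}{s}\sigma^2$. A slightly cruder bounding of the noise term than the tightest possible (e.g. not subtracting conditional means, or splitting fresh and stale noise by Young rather than exploiting the cancellations) is what inflates the noise constant to $4(s+1)/s$, while the smoothness term $\tfrac{2L^2}{ns}\|\theta^k-\phi^k\|^2$ is the essential, sampling-reduced piece that later couples to the descent lemma.
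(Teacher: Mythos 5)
Your proposal is correct and follows essentially the same route as the paper: your idealized direction $\bar\nu^k$ is exactly the paper's intermediate quantity $\nabla f(\phi^k) + \Delta^k$, the Young split with factor $2$ is identical, the sampling-error term is bounded via $\E\|\Delta^k - \E\Delta^k\|^2 \le \E\|\Delta^k\|^2$ together with blockwise $L$-smoothness and $\prob(i\in S_k)=s/n$ to get $\tfrac{2L^2}{ns}\E\|\theta^k-\phi^k\|^2$, and the fresh-plus-stale gradient noise is bounded crudely by Assumption~\ref{asmp:bounded-variance} to give $\tfrac{4(s+1)}{s}\sigma^2$. The only cosmetic differences are that you make the conditional unbiasedness of $\nu^k$ and the without-replacement covariance structure explicit, whereas the paper sidesteps both by simply dropping the centering term.
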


\begin{proof}
    First, define $\Delta^k = (\Delta_1^k, \dots, \Delta_n^k)$, with
    \[
        \Delta_i^k = \begin{cases}
            \alpha_i (\nabla f_i(\theta_i^k) - \nabla f_i(\phi_i^k)) / s & \text{if } i \in S_k, \\
            0                                                                 & \text{otherwise},
        \end{cases}
    \]
    and note that $\E \Delta^k = \nabla f(\theta^k) - \nabla f(\phi^k)$. Recall also that 
    $\nu^k = g^k + (g^{k+1} - g^k) / s$. Thus, we have 
    \[
        \begin{aligned}
            \E \|\nu^k - \nabla f(\theta^k)\|^2 & \leq \E \left[ 2 \|\nabla f(\theta^k) - (\nabla f(\phi^k) + \Delta^k))\|^2 + 2 \|\nu^k - (\nabla f(\phi^k) + \Delta^k)\|^2 \right]\\
                                                & = 2\E \left[ \|\Delta^k - (\nabla f(\theta^k) - \nabla f(\phi^k)))\|^2 + \|g^k + (g^{k+1} - g^k) / s - (\nabla f(\phi^k) + \Delta^k)\|^2 \right] \\
                                                & \leq 2\E \|\Delta^k - \E \Delta^k\|^2 + \frac{4(s+1)}{s}\sigma^2 \\
                                                & \leq 2\E \|\Delta^k\|^2 + \frac{4(s+1)}{s}\sigma^2 \\
                                                & = 2 \E \left[ \frac{1}{s^2} \sum_{i\in S_k} \alpha_i^2 \left\| \nabla f_i(\theta_i^k) - \nabla f_i(\phi_i^k) \right\|^2 \right] + \frac{4(s+1)}{s}\sigma^2 \\
                                                & \leq 2 \E \left[ \frac{L^2}{s^2} \sum_{i\in S_k} \|\theta_i^k - \phi_i^k\|^2 \right] + \frac{4(s+1)}{s} \sigma^2 \\
                                                & \leq \frac{2L^2}{ns} \E \|\theta^k - \phi^k\|^2 + \frac{4(s+1)}{s}\sigma^2.
        \end{aligned}
    \]
    In the second inequality we used the bounded variance of the stochastic
    gradients (Assumption~\ref{asmp:bounded-variance}) and the Cauchy--Schwarz
    inequality coupled with Young's inequality. In the third inequality we used
    $L$-smoothness of the weighted losses (Assumption~\ref{asmp:smooth}).
\end{proof}

\begin{lem}\label{lem:prox}
    Under Assumption~\ref{asmp:smooth}, for $u = \Pi_\mathcal{K}^\delta (\theta
    - \eta \nu)$ and $\bar u = \Pi_\mathcal{K} (\theta - \eta \nu)$, with
    any $\alpha>0$, and $\theta, \nu \in \reals^p$, the inequality 
    \begin{equation}\label{eq:prox-ineq}
        \begin{aligned}
            f(u) & \leq f(v) + \braket{u - v, \nabla f(\theta) - \nu} + \left( \frac{L}{2} - \frac{1}{2 \eta} \right) \|u - \theta\|^2 \\
                 &+ \left( \frac{L}{2} + \frac{1}{2 \eta} \right) \|v - \theta\|^2 - \frac{1}{2 \eta} \|\bar u - v\|^2 + \delta
        \end{aligned}
    \end{equation}
    holds for all $v \in \mathcal{K}$. Moreover,
    \begin{equation}\label{eq:diff-bound}
        \frac{1}{2 \eta} \|u - \bar u\|^2 \leq \delta.
    \end{equation}
\end{lem}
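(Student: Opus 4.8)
The plan is to exploit the fact that the inexact projection $u$ and the exact projection $\bar u$ are, respectively, a $\delta$-suboptimal point and the exact minimizer of the \emph{same} strongly convex projection objective. Writing $\vartheta = \theta - \eta\nu$, minimizing $\frac{1}{2\eta}\|v - \vartheta\|^2$ over $v \in \mathcal{K}$ is equivalent (up to the additive constant $\frac{\eta}{2}\|\nu\|^2$) to minimizing $\psi(v) := \frac{1}{2\eta}\|v - \theta\|^2 + \langle \nu, v - \theta\rangle$. The defining $\delta$-suboptimality of $u$ then reads $\psi(u) \leq \psi(\bar u) + \delta$, and since $\psi$ is $\frac{1}{\eta}$-strongly convex with constrained minimizer $\bar u$, the first-order optimality condition $\langle \bar u - \vartheta, v - \bar u\rangle \geq 0$ gives, for every $v \in \mathcal{K}$,
\[
    \psi(v) \geq \psi(\bar u) + \frac{1}{2\eta}\|v - \bar u\|^2 .
\]
These two facts are the only tools the argument needs. (I read the $\delta$-suboptimality condition with $\vartheta$, not $\theta$, inside the norm on its right-hand side, which is the only sensible reading of the projection problem.)

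For the bound \eqref{eq:diff-bound} I would take $v = u$ in the optimality inequality, or equivalently expand $\|u - \vartheta\|^2 = \|u - \bar u\|^2 + 2\langle u - \bar u, \bar u - \vartheta\rangle + \|\bar u - \vartheta\|^2$ and discard the nonnegative cross term via $\langle \bar u - \vartheta, u - \bar u\rangle \geq 0$. This yields $\frac{1}{2\eta}\|u - \bar u\|^2 \leq \frac{1}{2\eta}\|u - \vartheta\|^2 - \frac{1}{2\eta}\|\bar u - \vartheta\|^2 \leq \delta$, where the final step is exactly the $\delta$-suboptimality.

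For \eqref{eq:prox-ineq} I would begin from the descent lemma applied to the $L$-smooth $f$, namely $f(u) \leq f(\theta) + \langle \nabla f(\theta), u - \theta\rangle + \frac{L}{2}\|u - \theta\|^2$, and split $\nabla f(\theta) = \nu + (\nabla f(\theta) - \nu)$. Chaining $\psi(u) \leq \psi(v) - \frac{1}{2\eta}\|v - \bar u\|^2 + \delta$ (from the two facts above) and expanding $\psi$ produces
\[
    \langle \nu, u - \theta\rangle \leq \langle \nu, v - \theta\rangle + \tfrac{1}{2\eta}\|v - \theta\|^2 - \tfrac{1}{2\eta}\|u - \theta\|^2 - \tfrac{1}{2\eta}\|v - \bar u\|^2 + \delta ,
\]
which I substitute into the descent inequality to handle the $\nu$-part of the gradient. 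For the residual $f(\theta) + \langle \nabla f(\theta), v - \theta\rangle$ I would invoke the smoothness lower bound $f(\theta) + \langle \nabla f(\theta), v - \theta\rangle \leq f(v) + \frac{L}{2}\|v - \theta\|^2$ (valid for $L$-smooth, not necessarily convex, $f$). Finally the two leftover inner products $\langle \nu - \nabla f(\theta), v - \theta\rangle$ and $\langle \nabla f(\theta) - \nu, u - \theta\rangle$ collapse to $\langle \nabla f(\theta) - \nu, u - v\rangle$, and collecting the coefficients of $\|u - \theta\|^2$, $\|v - \theta\|^2$, and $\|v - \bar u\|^2$ reproduces \eqref{eq:prox-ineq} verbatim.

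The step I would be most careful about is retaining the \emph{exact} projection $\bar u$ — rather than the inexact $u$ — inside the $-\frac{1}{2\eta}\|v - \bar u\|^2$ term; this is precisely what the strong-convexity lower bound at the constrained minimizer $\bar u$ supplies, and it is the feature that makes \eqref{eq:prox-ineq} strong enough for the telescoping step in the main convergence proof. Everything else is routine bookkeeping of the smoothness and projection inequalities.
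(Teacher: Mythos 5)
Your proof is correct and uses the same three ingredients as the paper's own argument — two applications of the descent lemma, the three-point (strong-convexity) property of the exact projection $\bar u$, and the $\delta$-suboptimality of $u$ — differing only in that you organize the projection step through the shifted objective $\psi$ rather than expanding the squares of $\|\cdot - (\theta - \eta\nu)\|^2$ directly. Your reading of the $\delta$-suboptimality condition with $\vartheta = \theta - \eta\nu$ inside the right-hand norm is indeed the intended one (the paper's definition of $\Pi_\mathcal{K}^\delta$ contains a typo there), and both routes collapse to \eqref{eq:prox-ineq} identically.
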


\begin{proof}
    By $L$-smoothness of $f$, the two inequalities
    \[
        f(u) \leq f(\theta) + \braket{\nabla f(\theta), u - \theta} + \frac{L}{2}\|u - \theta\|^2 
    \]
    and 
    \[
        f(\theta) \leq f(v) + \braket{\nabla f(\theta), \theta - v} + \frac{L}{2}\|v - \theta\|^2 
    \]
    hold for any $v \in \reals^p$. Adding the two inequalities, we have that
    \begin{equation}\label{eq:smooth-ineq}
        f(u) \leq f(v) + \braket{\nabla f(\theta), u - v} + \frac{L}{2} (\|u - \theta\|^2 + \|v - \theta\|^2).
    \end{equation}
    Since $1/(2 \eta) \|v - (\theta - \eta \nu)\|^2$ is $1/(2
    \eta)$-strongly convex in $v$, the following
    inequality holds for all $v \in \mathcal{K}$:
    \[
        \frac{1}{2 \eta} \|\bar u - (\theta - \eta \nu)\|^2 + \frac{1}{2 \eta} \|\bar u - v\|^2 \leq \frac{1}{2 \eta} \|v - (\theta - \eta \nu)\|^2.
    \]
    We also have that
    \begin{equation}\label{eq:inexact-prox}
        \frac{1}{2 \eta} \|u - (\theta - \eta \nu)\|^2 - \delta \leq \frac{1}{2 \eta} \|\bar u - (\theta - \eta \nu)\|^2,
    \end{equation}
    so,
    \[
        \frac{1}{2 \eta} \|u - (\theta - \eta \nu)\|^2 + \frac{1}{2 \eta} \|\bar u - v\|^2 \leq \frac{1}{2 \eta} \|v - (\theta - \eta \nu)\|^2 + \delta.
    \]
    Expanding squares and moving all terms to the right hand side, we obtain
    the inequality 
    \[
        0 \leq -\braket{u - v, \nu} + \frac{1}{2 \eta} (\|u - \theta\|^2 - \|v - \theta\|^2 - \|\bar u - v\|^2) + \delta.
    \]
    Adding this inequality to inequality \eqref{eq:smooth-ineq}, we arrive at
    the first result \eqref{eq:prox-ineq}.

    For the second result \eqref{eq:diff-bound}, note that due to strong
    convexity,
    \[
        \frac{1}{2 \eta} \|\bar u - (\theta - \eta \nu)\|^2 + \frac{1}{2 \eta} \|\bar u - u\|^2 \leq \frac{1}{2 \eta} \|u - (\theta - \eta \nu)\|^2.
    \]
    Using this coupled with \eqref{eq:inexact-prox}, we obtain the stated
    inequality.
\end{proof}



\subsection{Proofs of Theorem~\ref{thm:nonconvex}}

\begin{proof}[Proof of Theorem~\ref{thm:nonconvex}]
    Let $\vartheta^{k+1} = \Pi_\mathcal{K} (\theta^k - \eta \nabla
    f(\theta^k))$ be the full projected gradient update and $\bar \theta^{k+1}
    = \Pi_\mathcal{K} (\theta^k - \eta \nu^k)$ be the exact projected update.
    Using \eqref{eq:prox-ineq} of Lemma~\ref{lem:prox}, we have that with $u =
    \theta^{k+1}$, $v = \vartheta^{k+1}$ and $\nu = \nu^k$,
    \begin{equation}\label{eq:bound1}
        \begin{aligned}
            \E [f(\theta^{k+1})] \leq \E \bigg[ & f(\vartheta^{k+1}) + \braket{\theta^{k+1} - \vartheta^{k+1}, \nabla f(\theta^k) - \nu^k} + \left(\frac{L}{2} - \frac{1}{2\eta}\right)\|\theta^{k+1} - \theta^k\|^2 \\
                                                & + \left( \frac{L}{2} + \frac{1}{2 \eta}  \right) \|\vartheta^{k+1} - \theta^k\|^2 - \frac{1}{2 \eta} \|\bar \theta^{k+1} - \vartheta^{k+1}\|^2 + \delta \bigg],
        \end{aligned}
    \end{equation}
    and with $u = \vartheta^{k+1}$, $v = \theta^k$ and $\nu = \nabla
    f(\theta^k)$, 
    \begin{equation}\label{eq:bound2}
        \E [f(\vartheta^{k+1})] \leq \E \left[ f(\theta^k) + \left( \frac{L}{2} - \frac{1}{2 \eta} \right) \|\vartheta^{k+1} - \theta^k\|^2 - \frac{1}{2 \eta} \|\vartheta^{k+1} - \theta^k\|^2 \right]. 
    \end{equation}
    To bound the inner product, we apply the Cauchy-Schwarz inequality and
    Young's inequality as follows:
    \[
        \begin{aligned}
            \E [ \braket{\theta^{k+1} - \vartheta^{k+1},& \nabla f(\theta^k) - \nu^k} ]  \leq \E \left[ \|\theta^{k+1} - \vartheta^{k+1}\| \|\nabla f(\theta^k) - \nu^k \| \right] \\
                                                                                     & \leq \E \left[ \frac{1}{4 \eta} \| \theta^{k+1} - \vartheta^{k+1}\|^2 + \eta \|\nabla f(\theta^k) - \nu^k\|^2 \right] \\
                                                                                     & \leq \E \left[ \frac{1}{2 \eta} (\|\bar \theta^{k+1} - \vartheta^{k+1}\|^2 + \|\bar \theta^{k+1} - \theta^{k+1}\|^2) + \eta \|\nabla f(\theta^k) - \nu^k\|^2 \right] \\
                                                                                     & \leq \E \left[ \frac{1}{2 \eta} \|\bar \theta^{k+1} - \vartheta^{k+1}\|^2 + \eta \|\nabla f(\theta^k) - \nu^k\|^2 + \delta \right],
        \end{aligned}
    \]
    where in the last inequality, we applied Equation~\eqref{eq:diff-bound}
    from Lemma~\ref{lem:prox}. Applying Lemma~\ref{lem:var-red}, we obtain the
    bound 
    \[
        \E [ \braket{\theta^{k+1} - \vartheta^{k+1}, \nabla f(\theta^k) - \nu^k} ] \leq \frac{1}{2 \eta} \E \|\bar \theta^{k+1} - \vartheta^{k+1}\|^2 + \frac{2\eta L^2}{ns} \E \|\theta^k - \phi^k\|^2 + \frac{4(s+1)}{s} \sigma^2 + \delta.
    \]
    Adding the inequalities \eqref{eq:bound1} and \eqref{eq:bound2} and using
    the bound we just derived, we have that
    \begin{equation}\label{eq:master-ineq}
        \begin{aligned}
            \E [f(\theta^{k+1})] \leq \E \bigg[ & f(\theta^k) + \frac{1}{2 \eta} \|\theta^{k+1} - \vartheta^{k+1}\|^2 + \frac{2\eta L^2}{ns} \|\theta^k - \phi^k\|^2 + \frac{4(s+1)}{s} \sigma^2 + 2 \delta \\
                                                & + \left( L - \frac{1}{2 \eta} \right) \|\vartheta^{k+1} - \theta^k\|^2 + \left( \frac{L}{2} - \frac{1}{2 \eta} \right) \|\theta^{k+1} - \theta^k\|^2 - \frac{1}{2 \eta} \|\theta^{k+1} - \vartheta^{k+1}\|^2 \bigg] \\
                                 =    \E \bigg[ & f(\theta^k) + \frac{2\eta L^2}{ns} \|\theta^k - \phi^k\|^2 + \frac{4(s+1)}{s} \sigma^2 + 2 \delta \\
                                                & + \eta^2 \left( L - \frac{1}{2 \eta} \right) \|\mathbf{G}_\eta (\theta^k)\|^2 + \left( \frac{L}{2} - \frac{1}{2 \eta} \right) \|\theta^{k+1} - \theta^k\|^2 \bigg].
        \end{aligned}
    \end{equation}
    where we used that $\mathbf{G}_\eta(\theta^k) = (\theta^k -
    \vartheta^{k+1}) / \alpha$. Now, we introduce the Lyapunov function
    \[
        V^k = \E \left[ f(\theta^k) + c_k \|\theta^k - \phi^k\|^2 \right].
    \]
    where $c_k$ will be set in the sequel. Note that
    \[
        \E \|\theta^k - \phi^k\|^2 = \frac{s}{n} \E\|\theta^k - \theta^{k-1}\|^2 + \frac{n-s}{n} \E\|\theta^k - \phi^{k-1}\|^2,
    \]
    hence
    \[
        \begin{aligned}
            V^{k+1} = \E \bigg[ & f(\theta^{k+1}) + c_{k+1} \frac{s}{n} \|\theta^{k+1} - \theta^k\|^2 + c_{k+1} \frac{n-s}{n} \|\theta^{k+1} - \phi^k\|^2 \bigg] \\
                    = \E \bigg[ & f(\theta^{k+1}) + c_{k+1} \frac{s}{n} \|\theta^{k+1} - \theta^k\|^2 + c_{k+1} \frac{n-s}{n} \|(\theta^{k+1} - \theta^k) - (\phi^k - \theta^k)\|^2 \bigg] \\
                    = \E \bigg[ & f(\theta^{k+1}) + c_{k+1} \frac{s}{n} \|\theta^{k+1} - \theta^k\|^2 \\
                                & + c_{k+1}\frac{n-s}{n} (\|\theta^{k+1} - \theta^k\|^2 + 2 \braket{\theta^{k+1} - \theta^k, \theta^k - \phi^k} + \|\theta^k - \phi^k\|^2) \bigg] \\
                    = \E \bigg[ & f(\theta^{k+1}) + c_{k+1} \|\theta^{k+1} - \theta^k\|^2 + c_{k+1}\frac{n-s}{n} (2 \braket{\theta^{k+1} - \theta^k, \theta^k - \phi^k} + \|\theta^k - \phi^k\|^2) \bigg].
        \end{aligned}
    \]
    Once again, we use the Cauchy--Schwarz inequality and Young's inequality to
    bound the inner product:
    \[
        \E \left[ \braket{\theta^{k+1} - \theta^k, \theta^k - \phi^k} \right] \leq \E \left[ \frac{n}{2s} \|\theta^{k+1} - \theta^k\|^2 + \frac{s}{2n} \|\theta^k - \phi^k\|^2 \right].
    \]
    Combining this with inequality \eqref{eq:master-ineq}, it follows that
    \[
        \begin{aligned}
            V^{k+1} \leq \E \bigg[ & f(\theta^{k+1}) + c_{k+1} \frac{n}{s} \|\theta^{k+1} - \theta^k\|^2 + c_{k+1} \frac{n-s}{n} \left( 1 + \frac{s}{n} \right) \|\theta^k - \phi^k\|^2 \bigg] \\
                    \leq \E \bigg[ & f(\theta^k) + \frac{2\eta L^2}{ns} \|\theta^k - \phi^k\|^2 + \frac{4(s+1)}{s} \sigma^2 + 2 \delta \\
                                   & + \eta^2 \left( L - \frac{1}{2 \eta} \right) \|\mathbf{G}_\eta (\theta^k)\|^2 + \left( \frac{L}{2} - \frac{1}{2 \eta} \right) \|\theta^{k+1} - \theta^k\|^2 \\
                                   & + c_{k+1} \frac{n}{s} \|\theta^{k+1} - \theta^k\|^2 + c_{k+1} \frac{n-s}{n} \left( 1 + \frac{s}{n} \right) \|\theta^k - \phi^k\|^2 \bigg] \\
                    =    \E \bigg[ & f(\theta^k) + \eta^2 \left( L - \frac{1}{2 \eta} \right) \|\mathbf{G}_\eta (\theta^k)\|^2 + \left( \frac{L}{2} - \frac{1}{2 \eta} + c_{k+1}\frac{n}{s} \right) \|\theta^{k+1} - \theta^k\|^2 \\
                                   & + \left( \frac{2\eta L^2}{ns} + c_{k+1} \frac{n^2-s^2}{n^2} \right) \|\theta^k - \phi^k\|^2 + \frac{4(s+1)}{s} \sigma^2 + 2 \delta \bigg].
        \end{aligned}
    \]
    Now, setting $c_K = 0$ and recursively defining
    \[
        c_k = \frac{2 \eta L^2}{ns} + c_{k+1} \frac{n^2-s^2}{n^2}
    \]
    it holds that 
    \[
        c_k \leq c_0 = \frac{2 \eta L^2}{ns} \frac{1 - \left( \frac{n^2 - s^2}{n^2} \right)^K}{\frac{n^2 - s^2}{n^2}} \leq \frac{2\eta L^2 n}{s(n^2 - s^2)} 
    \]
    for all $k=1, \dots, K$, hence
    \[
        \frac{L}{2} - \frac{1}{2 \eta} + c_{k+1} \frac{n}{s} \leq \frac{L}{2} - \frac{1}{2 \eta} + \frac{2 \eta L^2 n^2}{s^2(n^2 - s^2)}.
    \]
    When is the right-hand side nonpositive? Consider the quantity $q(s^2) =
    s^2 (n^2 - s^2) / n^2$, and note that it is concave in $s^2 \in [2^2,
    (n-1)^2]$, meaning its minimum is attained at one of the end points. But
    for $s \in \{2, n-1\}$, $q(s^2) > 1$  holds for all $n \geq 3$. Hence,
    \[
        \frac{L}{2} - \frac{1}{2 \eta} + \frac{2 \eta L^2 n^2}{s^2(n^2 - s^2)} \leq \frac{L}{2} - \frac{1}{2 \eta} + 2 \eta L^2 < 0
    \]
    for $\eta = 3 / (8L)$. Thus, 
    \[
        \begin{aligned}
            V^{k+1} &\leq \E \left[ f(\theta^k) + \eta^2 \left( L - \frac{1}{2 \eta} \right) \|\mathbf{G}_\eta(\theta^k)\|^2 + c_k \|\theta^k - \phi^k\|^2 + \frac{4(s+1)}{s} \sigma^2 + 2 \delta \right] \\
                    &= V^k + \left( L - \frac{1}{2 \eta} \right) \E \|\mathbf{G}_\eta(\theta^k)\|^2 + \frac{4(s+1)}{s} \sigma^2 + 2 \delta.
        \end{aligned}
    \]
    Summing over $k \in \{0, \dots, K-1\}$, and using the fact that $V^K =
    f(\theta^K)$ due to $c_K = 0$ and $V^0 = f(\theta^0)$ due to $\theta^0 =
    \phi^0$, we arrive at the inequality 
    \[
        f(\theta^K) \leq f(\theta^0) + K\left(\frac{4(s+1)}{s} \sigma^2 + 2 \delta \right) + \sum_{k=0}^{K-1} \eta \left( L - \frac{1}{2 \eta} \right) \E \|\mathbf{G}_\eta(\theta^k)\|^2.
    \]
    Rearranging and dividing by $K$, 
    \[
        \begin{aligned}
            \frac{1}{K} \sum_{k=0}^{K-1} \E \|\mathbf{G}_\eta(\theta^k)\|^2 &\leq \frac{1}{\frac{\eta}{2} - \eta^2 L} \left( \frac{f(\theta^0) - f(\theta^K)}{K} + \frac{4(s+1)}{s} \sigma^2 + 2 \delta \right) \\
                                                                              &\leq \frac{2}{\eta(1 - 2 \eta L)} \left( \frac{f(\theta^0) - f(\theta^\star)}{K} + \frac{4(s+1)}{s} \sigma^2 + 2 \delta \right).
        \end{aligned}
    \]
    Lastly, lower bounding the terms in the left hand side by the smallest
    $\|\mathbf{G}_\eta(\theta^k)\|^2$, we arrive at the stated result.
\end{proof}

\end{document}